%% file: example_paper.tex
\documentclass[11pt]{article}
\usepackage[ruled,vlined]{algorithm2e} 
\usepackage{multirow}
\usepackage{times}
\usepackage{amsmath,amssymb}
\usepackage{amsthm}
\usepackage{graphicx}
\usepackage{booktabs}
\usepackage{hyperref}
\usepackage[capitalize,noabbrev]{cleveref}
\usepackage{textcomp}
\usepackage{wrapfig}
\usepackage{graphicx}
\usepackage{subcaption} 
\usepackage[textsize=tiny]{todonotes}
\allowdisplaybreaks[1] 

\usepackage{rotating}
\usepackage{natbib} 
\title{Toward Faithful and Complete Answer Construction from a Single Document}

\author{%
Zhaoyang Chen \\
Department of Mechanical Engineering \\
Iowa State University \\
2529 Union Drive, Ames, IA 50011-2030 \\
\texttt{zchen1@iastate.edu}
\and
Cody Fleming \\
Department of Mechanical Engineering \\
Iowa State University \\
2529 Union Drive, Ames, IA 50011-2030 \\
\texttt{flemingc@iastate.edu}
}

\date{}

\theoremstyle{plain}
\newtheorem{claim}{Claim}

\theoremstyle{definition}

\theoremstyle{remark}

\counterwithout{figure}{section}

\begin{document}
\maketitle

\begin{abstract}
Modern large language models (LLMs) are powerful generators driven by statistical next-token prediction.
While effective at producing fluent text, this design biases models toward high-probability continuations
rather than exhaustive and faithful answers grounded in source content.
As a result, directly applying LLMs lacks systematic mechanisms to ensure both completeness
(avoiding omissions) and faithfulness (avoiding unsupported content), which fundamentally conflicts
with core AI safety principles.
To address this limitation, we present EVE, a structured framework for document-grounded reasoning.

Unlike free-form prompting, EVE constrains generation to a structured, verifiable pipeline that decomposes
high-rigor reasoning into extraction, validation, and enumeration.
Empirically, this design enables consistent and simultaneous improvements in recall, precision, and F1-score:
recall and precision increase by up to 24\% and 29\%, respectively, with a corresponding 31\% gain in F1-score.
This effectively breaks the long-standing trade-off between coverage and accuracy typical of single-pass
LLM generation, while also mitigating generation truncation caused by length limitations.
At the same time, we emphasize that EVE exhibits performance saturation due to the inherent ambiguity of
natural language, reflecting fundamental limits of language-based reasoning.

\end{abstract}
\input{introduction}
\input{relatedWork}
\input{theory1}

\input{thoery2}

\input{STPA}
\input{limitations}

\bibliography{example_paper}
\bibliographystyle{plainnat}
\input{appendix}
\end{document}

%% file: introduction.tex
\section{Introduction}

\begin{figure}[htbp]
  \centering
  \begin{subfigure}{0.8\linewidth}
    \centering
    \includegraphics[width=\linewidth]{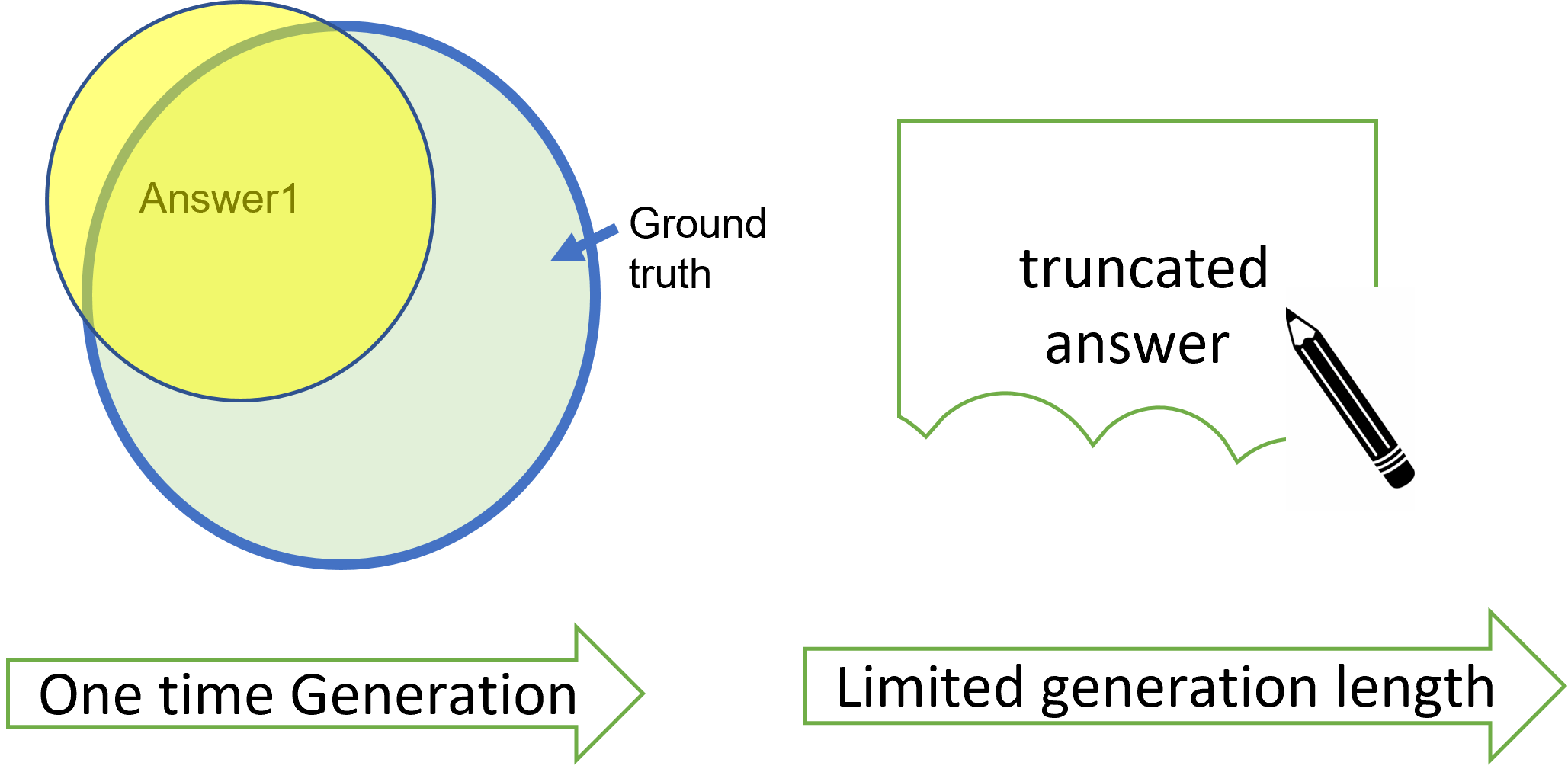}
    \caption{Illustration of traditional content-based question answering}
    \label{Illustration of traditional content-based question answering}
  \end{subfigure}
  \hfill
  \begin{subfigure}{0.8\linewidth}
    \centering
    \includegraphics[width=\linewidth]{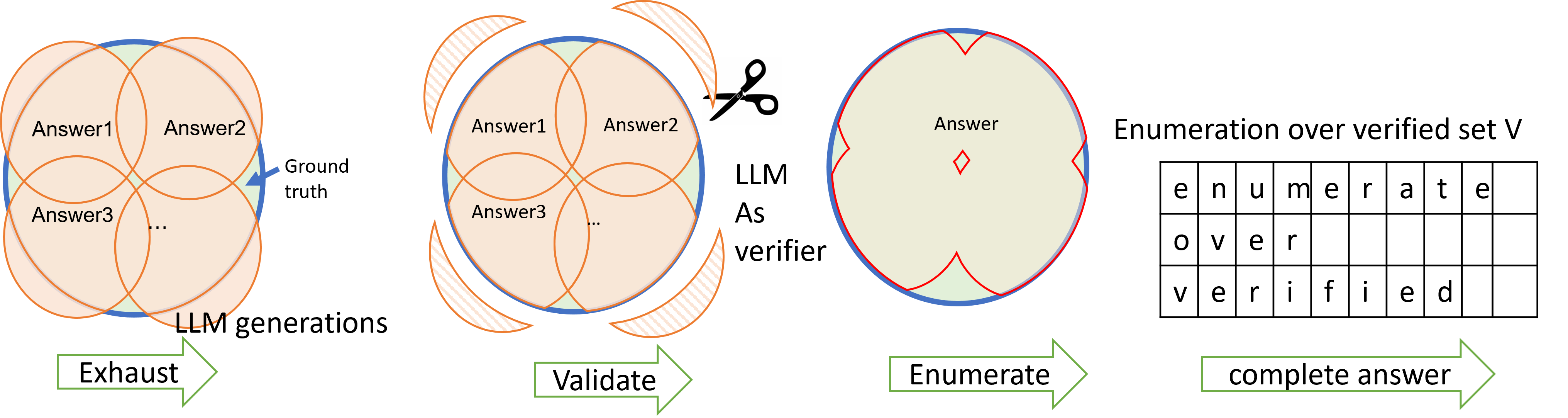}
    \caption{Illustration of EVE}
    \label{Illustration of EVE}
  \end{subfigure}
  \caption{comparison between traditional generation method and EVE} 
  \label{fig:my_figure} 
\end{figure}
A large fraction of real-world LLM applications operate in a single-document setting. The model is given one moderately long piece of text—such as a paper or technical specification—and is asked to extract information and answer questions based on that document. Within this broad class of tasks,  many tasks require a high level of rigor—such as safety analysis, legal auditing, and  autonomous system validation—demand not only plausible outputs, but strict guarantees on completeness and correctness. In these domains, missing a single critical element or introducing an unsupported statement can lead to severe real-world consequences. However, modern LLMs remain fundamentally likelihood-driven and end-to-end, generating outputs via in-distribution sampling. While such models achieve strong average-case performance, they offer no principled mechanism to bound the probability of correctness—a fundamental conflict with the reliability, robustness, and verifiability requirements central to AI safety.

In this work, we introduce Extraction–Validation–Enumerate (EVE), a modality-agnostic structured reasoning framework that replaces unconstrained generation, as shown in Figure~\ref{Illustration of traditional content-based question answering} and Figure~\ref{Illustration of EVE}.  By decomposing reasoning into element-wise search and validation stages—each with its own voting and aggregation mechanism—EVE transforms a single high-variance generative decision into a composition of low-variance, independently verifiable steps. We provide a theoretical analysis showing that this multi-stage architecture achieves exponential reduction in omission and hallucination probabilities. As an example, we include Figure~\ref{recall plot w.r.t number of independent queries} as an illustrative, high-level visualization to highlight the rapid reduction of omission errors under this process of multiple, independent querying and assimilation that results from EVE.

To demonstrate the generality and rigor of EVE, we apply it to a high-stakes safety-critical domain: System-Theoretic Process Analysis (STPA) \cite{leveson2011engineering,fleming2015integrating}. We introduce the first structured STPA dataset and present the first automated STPA analysis pipeline powered by LLMs, thereby expanding LLM-based reasoning into a domain where correctness is mandatory rather than optional. Across multiple state-of-the-art models, EVE delivers substantially more reliable, high-recall, and hallucination-resistant reasoning, effectively converting unbounded generative risk into bounded and quantifiable error probabilities. 


We also clarify the scope of applicability of our framework. Our approach targets single-document, closed-world settings where answers ideally remain tightly grounded in the source text and require exhaustive identification of key document-level elements. It is not designed for tasks where a short excerpt merely serves as a prompt for long, unconstrained internal reasoning followed by a synthesized answer, which are more naturally addressed by Chain-of-Thought or Tree-of-Thought style methods. In settings outside this scope, the framework is not expected to offer additional advantages over standard generation.
\begin{figure}
    \centering
    \includegraphics[width=1\linewidth]{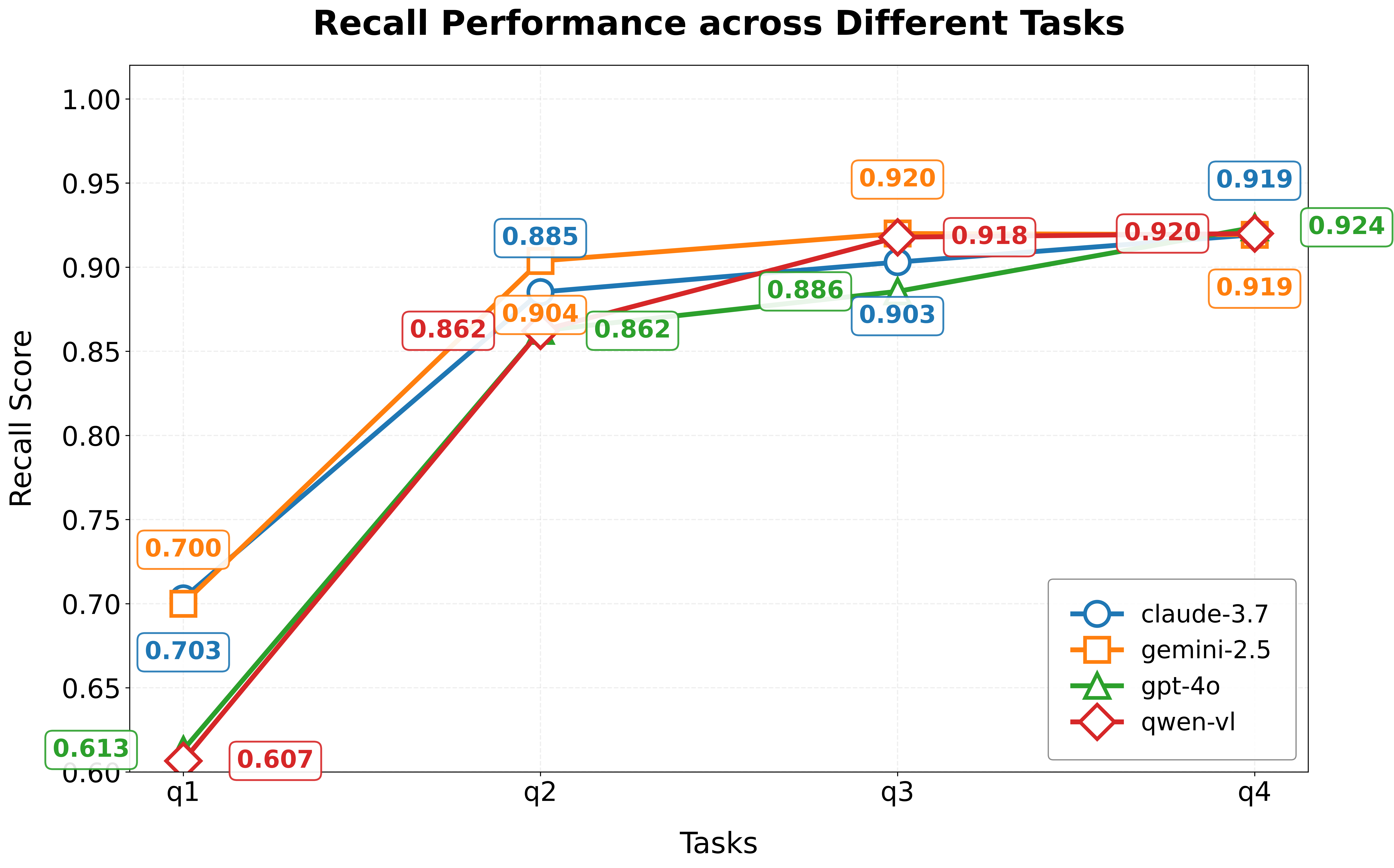}    
    \caption{recall plot w.r.t number of independent queries}
     \label{recall plot w.r.t number of independent queries}
\end{figure}

%% file: relatedWork.tex
\section{Related Work}
Research on improving the reliability of large models has produced several methodological families. We briefly review the most relevant 
lines of work.However, these methods address related but different objectives, and do not directly target document-level question answering with completeness and faithfulness .
\paragraph{Prompt-based structured reasoning: Chain-of-Thought (CoT), Tree-of-Thought (ToT), and their variants} When questions of coverage or completeness are raised in the context of LLMs, prior work most often points to CoT and ToT style reasoning. However, these methods address a fundamentally different problem setting.
CoT prompting~\cite{wei2022cot} and ToT reasoning~\cite{yao2023tot} encourage the model 
to generate intermediate steps or explore multiple reasoning branches. 
Self-consistency decoding~\cite{wang2023selfconsistency} further samples 
multiple reasoning paths to vote on the final answer.  
They primarily address tasks where the objective is to derive a solution from a compact problem statement. In these settings, reasoning proceeds by iteratively refining model-internal thought states.
In contrast, our setting requires exhaustive extraction and verification of information from a single source document. TOT does not explicitly structure reasoning around document-grounded answer searching. \textit{In their structures, once an initial interpretation is formed, subsequent search operates largely within the model’s internal reasoning space rather than repeatedly re-grounding in the source text.} As a result, omissions introduced during early interpretation cannot be systematically recovered, and document-level correctness cannot be guaranteed.
\paragraph{Retrieval-augmented generation (RAG)}
RAG reduces hallucinations by grounding model outputs in externally retrieved evidence~\cite{lewis2020rag}. Given a query, RAG retrieves the top-$k$ semantically relevant text chunks and conditions generation on these retrieved contexts, substantially improving factual alignment.
However, RAG fundamentally optimizes for relevance rather than coverage. Relevant information may lie outside the retrieved top-$k$. Retrieval is inherently biased by the initial query formulation, making some document elements systematically easier to retrieve than others. Increasing $k$ to improve recall quickly undermines the efficiency and purpose of retrieval. Multi-stage RAG or iterative RAG~\cite{fang-etal-2025-kirag} pipelines further amplify this issue: These new RAG variants first issue a broad query to retrieve relevant text chunks, and then iteratively refine the solution through more fine-grained retrieval and reasoning within the initially retrieved context.\textit{ However, information missed during the early retrieval stage cannot be recovered in later, more refined queries.
Moreover, RAG does not provide a validation mechanism to validate whether extracted content is even truly relevant or presented in the source document.} As a result, while RAG improves factual grounding, it offers no guarantees of document-level completeness or logical correctness.
\paragraph{Post-hoc verification and critic models.}
Post-hoc verification first lets the model generate a complete answer, and only afterwards checks whether it is logically or factually correct. Several works use self-evaluation, auxiliary critics, or 
LLM-as-a-judge mechanisms~\cite{selfcheckgpt2023} to detect 
hallucinations after the model has already produced an answer.  
\textit{Such methods can reject wrong outputs, but they cannot improve the model’s ability to generate correct ones. Moreover, they only assess what has been generated and therefore cannot detect or fix omissions.}So, this generation process often involves multiple trials and may not be able to finally find the answer.
In contrast, our method both exponentially improves recall and  performs validation during the generation process. After each result is validated by the model and wrong elements are pruned, the results are enumerated for final answer generation. Therefore, the content used for further generation has already been checked, preventing errors from propagating and enabling the model to produce correct solutions in a single pass. 
\paragraph{Program-aided and tool-using reasoning.}
Program-aided approaches, such as \textsc{Pal}~\cite{cheng2023pal}, \textsc{ReAct}~\cite{yao2023react}, and \textsc{Reflexion}~\cite{shinn2023reflexion}, improve performance by delegating parts of the reasoning process to executable programs. This operation ensures \emph{execution correctness}: the generated code can be run, debugged, and reliably perform arithmetic or external actions, leading to substantial gains on algorithmic tasks~\cite{cheng2023pal}. However, they do not prevent a critical failure mode that arises in high-rigor, long-form generation: when the required answer exceeds the model's single-pass generation capacity, \emph{omissions} and \emph{ordering errors} become unavoidable, even if every step is correctly executed.  Our work addresses a this fundamental bottleneck: 
instead of indiscriminately increasing generation length, we first generate a logical skeleton composed of structured sub-units, each of which serves as a prompt for a subsequent generation step. This induces a tree-structured, hierarchical expansion, where each sub-unit can be recursively refined and extended. Through this design, generation proceeds by progressively expanding the skeleton rather than by a single long pass, enabling structured outputs to scale to arbitrary depth while preserving ordering and coverage. While prior research proposed decomposition methods (e.g., multi-agent collaboration~\cite{khot2022decomposed}); they aim to leverage complementary expertise across models.

\paragraph{Decomposition frameworks and agent-style methods.}
These methods give the model substantial freedom to decide \emph{how} to act: when to search, when to call external tools, when to reflect, and when to terminate, such as ReAct~\cite{yao2022react}, Reflexion~\cite{shinn2023reflexion}, Voyager-style continual agents, and various LLM-based planners. This flexibility often improves task performance, as the model can opportunistically explore and select useful operations.  However, the design objective of these systems is
to \emph{increase} the model's freedom space, and unavoidably enlarged the error space. 
We keep the answer space fixed but repeatedly “squeeze” the error space through independent retrieval and validation steps, so that the probability of error shrinks by design exponentially, rather than by tuning.

%% file: theory1.tex
\section{Theoretical Analysis}
\subsection{EVE algorithm explanation}

\input{algorithm}

\subsection{Exponential Error Reduction through Aggregation and Multi-Stage Voting}
\label{Exponential Error Reduction through Aggregation and Multi-Stage Voting}
\begin{claim}[Exponential Omission Reduction of EVE]
Under the EVE architecture, where extraction and validation are performed through multiple independent queries at each stage, the overall probability of omission decreases exponentially in the extraction stage with the number of independent queries.
\end{claim}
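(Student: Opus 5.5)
We formalize the extraction stage and reduce the claim to an independence computation. Let the ground-truth answer set extracted from the document be $E^\ast=\{e_1,\dots,e_M\}$. In the extraction stage EVE issues $N$ independent queries $Q_1,\dots,Q_N$ against the same source text and aggregates their outputs by union (before validation prunes spurious items). For each element $e_i$ we posit a per-query detection probability $p_i>0$. That is, $\Pr[e_i \in Q_j]=p_i$ for every $j$. The modeling assumption, which is the content of ``independent queries'' in the statement, is that the events $\{e_i\in Q_j\}_{j=1}^N$ are mutually independent for fixed $i$. We write $p_{\min}=\min_i p_i$ and take this to be bounded away from zero. This positivity condition is essential: an element that is never detectable cannot be recovered by repetition.

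The first step is the per-element bound. Element $e_i$ is omitted from the aggregated union exactly when every query misses it. By independence,
\begin{equation*}
\Pr[e_i \text{ omitted}] = \prod_{j=1}^N (1-p_i) = (1-p_i)^N \le e^{-p_i N}.
\end{equation*}
The second step lifts this to the whole answer set by a union bound:
\begin{equation*}
\Pr[\exists\, i:\ e_i \text{ omitted}] \le \sum_{i=1}^M (1-p_i)^N \le M (1-p_{\min})^N \le M e^{-p_{\min} N}.
\end{equation*}
This decays exponentially in $N$. Equivalently, the expected recall is
\begin{equation*}
\mathbb{E}[\mathrm{recall}] = 1-\tfrac{1}{M}\sum_i (1-p_i)^N,
\end{equation*}
which approaches $1$ at an exponential rate, matching the curve in Figure~\ref{recall plot w.r.t number of independent queries}.

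The third step checks that the later stages cannot undo this gain. Voting thresholds are applied only in validation, not to the extraction union. Validation therefore acts on a candidate set that already contains each true element with probability at least $1-(1-p_i)^N$. A true element is lost only if it is either unextracted or wrongly rejected, so
\begin{equation*}
\Pr[e_i \text{ lost}] \le (1-p_i)^N + r_i,
\end{equation*}
where $r_i$ is the validation false-rejection probability. If validation uses $K$ independent votes with majority rule and per-vote correctness $q>1/2$, Hoeffding's inequality gives $r_i\le e^{-2K(q-1/2)^2}$. Both terms then decay exponentially. The claim itself concerns only the extraction stage, however, so the first term is what must be shown.

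The main obstacle is the independence assumption. Repeated queries to the same LLM on the same document are correlated, and some elements may be systematically hard to find. I would handle this by stating the result conditionally. Suppose the queries are only conditionally independent given a latent ``difficulty'' variable $\theta_i$. Then the omission probability is
\begin{equation*}
\Pr[e_i \text{ omitted}] = \mathbb{E}_{\theta_i}\!\left[(1-p_i(\theta_i))^N\right].
\end{equation*}
This is still exponentially decaying provided $p_i(\theta)\ge p_{\min}>0$ almost surely. That condition is the precise hypothesis the claim needs, and it also explains the saturation noted in the abstract: elements whose detection probability is effectively zero, because of ambiguity, do not benefit from additional queries.
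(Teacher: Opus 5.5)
Your argument is correct and matches the paper's proof: independence of the extraction attempts makes the per-element omission probability a product of per-attempt miss probabilities, which decays geometrically in the number of queries. Your union bound over the whole answer set, the validation false-rejection term, and the latent-difficulty caveat are sound additions the paper does not make (it stops at the per-element product), and since EVE's prompts come from different perspectives you could allow query-dependent rates $p_{i,j}$, with your bound still holding for $p_{\min}=\min_{i,j}p_{i,j}$.
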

\begin{proof}
When the system performs $M_e$ independent discovery extractions, let $p_{i,e}$ denote the probability that the attempt $i$ successfully extracts the entities. 
The probability that $M_e$ attempts fail to discover entities is
$ (1 - p_{i,e}).$Therefore, the probability that $e_i$ is discovered by at least one attempt is
$P_E=1 - \prod_{m=1}^{M_e} (1 - p_{i,e})$.
$P_E$ is the total extraction success rate. As $M_e$ increases, the omission  
decreases exponentially.
\end{proof}
\begin{table}[ht]
\centering
\caption{Theoretical comparison between free-form generation and EVE using the same underlying model accuracy parameters.}
\label{tab:comparison}
\begin{tabular}{lcc}
\toprule
\textbf{Metric} & \textbf{Free-form (baseline)} & \textbf{EVE} \\
\midrule
Discovery & 
$P_D^K = 0.6$ & 
$\begin{aligned}
&\prod_{i=1}^{K}\left( 1 - \prod_{m=1}^{M_i} ( p_{i,m}) \right) \\
&= 1 - 0.6^4 = 0.974
\end{aligned}$ \\
\addlinespace

Error Accept & 
$(1 - \varepsilon_{i,v}) = 0.8$ & 
$\begin{aligned}
&\binom{4}{3}0.8^3(0.2) + \binom{4}{4}0.8^4 \\
&+ \frac{1}{2}\binom{4}{2}0.8^2(0.2)^2 \\
&= 0.896
\end{aligned}$ \\
\addlinespace

Explored & 
fixed $M(5-12)/64 \approx 0.188$ & 
$(B \cdot F)^L = 1$ (ideal) \\
& & (e.g., 64) \\
\bottomrule
\end{tabular}
\end{table}
However, the extraction stage is designed for maximal coverage. As a result, the extracted set may contain both (i) multiple names that refer to the same underlying entity, and (ii) items that do not belong to this answer at all. Therefore, the validation stage is necessary: it removes elements that should not be present, and merges aliasing of the same answers. 
\begin{claim} The validation stage, with majority voting over independent validation queries, yields an exponentially decreasing error probability with respect to the number of independent queries.  
\end{claim}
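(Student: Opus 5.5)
We model validation of a single extracted candidate $e$ as $M_v$ independent yes/no queries. Let $\varepsilon_{i,v}<\tfrac12$ be the probability that query $i$ gives the wrong verdict: it accepts an element that should be pruned or merged, or rejects a correct one. Independence is the same modelling assumption used in the proof of the previous claim for extraction. The aggregated decision is a majority vote, with ties broken by a fair coin; this matches the $\tfrac12\binom{4}{2}$ term in Table~\ref{tab:comparison}. The validation stage then errs on $e$ exactly when at least half of the $M_v$ verdicts are wrong. Define $X_i=\mathbf{1}[\text{query } i \text{ errs}]$ and $S=\sum_{i=1}^{M_v}X_i$. The goal is
\[
P_{\mathrm{err}} \le \Pr\!\left[S\ge \tfrac{M_v}{2}\right] \le \exp\!\left(-c\,M_v\right)
\]
for a constant $c>0$ that depends only on the per-query error rates.

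First I treat the homogeneous case $\varepsilon_{i,v}=\varepsilon<\tfrac12$. Here the error probability is the binomial tail
\[
\sum_{k\ge M_v/2}\binom{M_v}{k}\varepsilon^k(1-\varepsilon)^{M_v-k},
\]
which is the expression evaluated in Table~\ref{tab:comparison} with $M_v=4$ and $\varepsilon=0.2$. To bound it, I use $\binom{M_v}{k}\le 2^{M_v}$ and $\varepsilon^k(1-\varepsilon)^{M_v-k}\le(\varepsilon(1-\varepsilon))^{M_v/2}$ for $k\ge M_v/2$. This gives
\[
P_{\mathrm{err}}\le \bigl(4\varepsilon(1-\varepsilon)\bigr)^{M_v/2}.
\]
Since $4\varepsilon(1-\varepsilon)<1$ whenever $\varepsilon\ne\tfrac12$, the bound decays exponentially in $M_v$. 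Alternatively, Hoeffding's inequality applied to $S$ gives $\exp\!\left(-2M_v(\tfrac12-\varepsilon)^2\right)$.

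Next I handle heterogeneous error rates. Let $\bar\varepsilon=\frac1{M_v}\sum_i\varepsilon_{i,v}$ and assume the margin condition $\bar\varepsilon\le\tfrac12-\delta$ for some $\delta>0$. Hoeffding's inequality for independent, non-identical Bernoulli variables still applies and gives
\[
\Pr[S\ge M_v/2]\le\exp(-2\delta^2 M_v).
\]
The stage-level guarantee for the whole candidate set then follows by a union bound over the $N$ extracted candidates: $N\exp(-2\delta^2M_v)$. This can be combined with the extraction bound from the previous claim.

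The main obstacle is not the calculation but justifying the hypotheses. Majority voting only reduces error when each query is better than chance, $\varepsilon_{i,v}<\tfrac12$ (with a uniform margin $\delta$). If $\varepsilon>\tfrac12$, the same argument shows the error tends to $1$ instead. Independence across LLM queries is also an idealization, since correlated errors would break the product structure. I would state both conditions explicitly as assumptions of the claim. I would also remark that under correlated errors the bound holds only with an effective number of independent queries smaller than $M_v$. This is consistent with the saturation the paper attributes to linguistic ambiguity.
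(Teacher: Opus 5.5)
Your proof is correct and takes the same route as the paper: independent, better-than-chance queries combined by majority vote, with the error controlled by a Hoeffding/Chernoff tail. The paper's bound $2\exp\!\left(-\frac{m_v}{2}(1-2(1-p_{i,M}))^2\right)$ is exactly twice your $\exp\!\left(-2M_v(\tfrac12-\varepsilon)^2\right)$, since $(1-2\varepsilon)^2=4(\tfrac12-\varepsilon)^2$. Your elementary bound $(4\varepsilon(1-\varepsilon))^{M_v/2}$ and the heterogeneous and union-bound extensions are valid additions, but for the elementary bound use $\sum_{k}\binom{M_v}{k}\le 2^{M_v}$ on the whole sum rather than bounding each coefficient by $2^{M_v}$.
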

\begin{proof}
Discovered candidates may include false positives. For each candidate, we perform $m_v$ independent validation queries (e.g., "Is this a valid answer? should this answer be merged with another answer? "). Intuitively, let $p_{i,M}$ denote the probability that the query $i$ correctly validates the entities. The probability of correctly validating all the extracted entities from the validation stage is:

\noindent$
\sum_{i=1}^{\frac{mv}{2}}
\binom{\frac{mv}{2}}{i}
\, p_{i,M}^{\frac{mv}{2}+i}
\left(1-p_{i,M}\right)^{\frac{mv}{2}-i}
$.
The error probability of majority voting can be bounded using standard
Chernoff/Hoeffding concentration inequalities.  In particular, it is a
classical result in ~\cite{mitzenmacher2005probability}
or ~\cite{boucheron2013concentration}
that if each query has an error rate at most $1-p_{i,M}<50\%$ and the
final decision is taken by majority vote over $m_v$ independent queries,
then
\begin{equation}
    P_{\mathrm{verify\text{-}error}}
    \;\le\;
    2 \exp\!\left(
        - \frac{m_v}{2} (1 - 2(1-p_{i,M}))^2
    \right).
\end{equation}
\end{proof}
Since the validation task is binary, random guessing attains an error rate of \(50\%\).
Thus, assuming \(1 - p_{i,M} < 50\%\) constitutes only a minimal sanity check for any reasonably capable LLM.
Combining these stages, results in an overall exponential reduction in the probability of false-positives or hallucinations with multiple independent queries, establishing the claim.

%% file: algorithm.tex
\begin{algorithm}[ht]
\caption{EVE: Structured Reasoning with Exponential Error Reduction}
\label{alg:eve}
\textbf{Input:} \\
\Indp 
Document $D$, LLM $M_e$, $M_v$, \\
Extraction query set $Q_{\mathrm{ext}} = \{q_{\mathrm{ext}}^{(1)}, \dots, q_{\mathrm{ext}}^{(M_e)}\}$, \\
Verification query set $Q_{\mathrm{val}} = \{Q_{\mathrm{val}}^{(1)}, \dots, Q_{\mathrm{val}}^{(M_v)}\}$ \\
Explanation query $Q_{\mathrm{f}}$, \\
\textbf{Output:} Structured report $R$
\Indm 

$C \gets \emptyset$ \tcp*{Candidate entities from all extractions}
\For{$i = 1$ \KwTo $M_e$}{
    $E \gets M\left(q_{\mathrm{ext}}^{(i)}, D\right)$ \tcp*{Independent extraction query $i$}
    $C \gets C \cup E$
}

$V \gets \emptyset$ \tcp*{Verified entities after voting}
\ForEach{$e \in C$}{
    $votes_e \gets 0$ \\
    \For{$j = 1$ \KwTo $M_v$}{
        $a \gets M\left(Q_{\mathrm{val}}^{(j)}, D, e\right)$ \tcp*{Independent verification query $j$}
        \If{$a = \text{True}$}{
            $votes_e \gets votes_e + 1$
        }
    }
    \If{$votes_e \ge M_v/2$}{
        $V \gets V \cup \{e\}$
    }
}

$R \gets \emptyset$ \\
\ForEach{$e \in V$}{
    $d \gets Q_f(\text{Describe}(e), D)$ \tcp*{Describe each verified entity}
    $R \gets R \cup \{(e, d)\}$
}

\Return $\text{Format}(R)$
\end{algorithm}

As shown in Algorithm \ref{alg:eve}, at a high level, content-based question answering can be viewed as a multi-stage process: the system first searches and generates a set of candidate elements, then validates them against the source content, and finally enumerates the validated elements into a complete answer.
When the required output exceeds the effective generation capacity of a single LLM pass, the explanation must be progressively extended from a high-level reasoning skeleton to more detailed, lower-level descriptions.
Precisely, EVE can be decomposed into 3 stages:
\begin{enumerate}
    \item \textbf{Exhaust Stage}: $m_e$ independent searches to identify candidate answer elements.   
    \item \textbf{Validation Stage}: $m_v$ independent validation queries per candidate, with voting to validate the candidate elements.
    \item \textbf{Enumeration Stage}: an upper-level LLM generation constructs a structured intermediate representation
    that encodes all information required for downstream expansion.
    This representation is designed to be algorithmically readable and
    drives subsequent iterative procedures. The lower level LLM generation is prompted with both the original
document and the encoded structure to generate detailed, element-level content.
\end{enumerate}
Finally, all the information is stacked together to generate a complete report.

Algorithm~\ref{alg:eve} describes the EVE framework for structured reasoning under a
closed-world assumption. The input document $D$ is assumed to contain all relevant
information.
In the extraction stage, the query set $Q_{\mathrm{ext}}$ consists of $M_e$ independent
and parallel extraction queries, each designed from a different perspective to extract
relevant candidate entities from the document. The union of all extracted results forms
the candidate set $C$, which intentionally prioritizes recall and may include spurious
entities as well as multiple forms referring to the same underlying entity.
In the validation stage, each candidate $e \in C$ is evaluated using $M_v$ independent
validation queries from $Q_{\mathrm{val}}$. The responses are aggregated by majority
voting, and only candidates receiving at least $M_v/2$ positive votes are retained.
Beyond filtering incorrect candidates, this stage also implicitly consolidates entities:
candidates that are judged to refer to an already validated entity are treated as aliases
and grouped under the same canonical entity, rather than as separate
entries.

Finally, for each validated entity, an explanation query $Q_f$ is used to generate a
document-grounded description conditioned on both the original document and the validated
entity. These descriptions are generated independently and assembled into the final
structured report $R$. By separating exhaustive extraction, voting-based validation,
and entity-wise explanation, EVE replaces a single high-variance generation with a
sequence of low-variance, verifiable steps, enabling exponential error reduction.

%% file: thoery2.tex
\subsection{Robust Enumeration via Multiple Attempts}
Crucially, the failure of single-pass LLM generation in enumeration tasks is not solely due to insufficient generation length.
Even if the length budget were unlimited, free-form reasoning would still be prone to omission.
Enumerating all relevant elements is inherently error-prone without a guidance of ordering.
In the absence of a predefined skeleton, the model has no principled criterion to determine whether the enumeration is complete, leading to missed elements and replicates.

Generation length further exacerbates this issue.
When the number of elements grows, the model may truncate its output before completing the enumeration, even if additional cases are implicitly recognized.
As a result, length constraints and correctness failures interact, making single-pass generation unreliable for safety-critical settings.

By contrast, the skeleton-based design in EVE explicitly separates correctness from capacity.
The upper-level controller enforces a structured enumeration process, while the lower-level LLM is responsible only for local reasoning within each attempt.
Through repeated, independent generation attempts governed by an explicit iterative algorithm, EVE ensures that all elements are systematically covered.

\subsection{Overall Completeness Probability}
Table~\ref{tab:comparison} is intended as an illustrative example to offer intuition about the magnitude of improvement that can be expected of EVE compared to typical LLM performance levels. To fairly demonstrate,
 we adopt a typical and homogeneous parameter setting, 
\(M_e = 4\), \(M_v = 4\), \(p_{i,e} =0.6\), and \(p_{i,M} = 0.7\).
Specifically, we assume four independent extraction attempts and four validation attempts, with constant success probabilities for each attempt.
In practice, the probability of obtaining a correct result varies across different prompts, tasks, and model capabilities.
However, to isolate the algorithmic effect and provide a clear illustration, we intentionally abstract away these variations and consider fixed representative values.
The selected probabilities are consistent with ranges reported in prior literature, and the detailed justification of these parameter choices is provided in Appendix~\ref{tab:param_justification}.

%% file: STPA.tex
\section{Experiment: STPA as a High-Rigor Reasoning Benchmark}
To evaluate the reliability and rigor of the EVE framework, we apply it to automated System-Theoretic Process Analysis (STPA), a canonical instance that requires high-rigor and has closed-world reasoning tasks. STPA is a safety-critical task whose objective is to systematically identify all Unsafe Control Actions (UCAs) that may place the system into hazardous states. Unlike open-ended reasoning tasks, STPA requires strict completeness and precision: missing a single UCA or its causal propagation path may leave critical safety risks unaccounted for. This makes STPA an ideal benchmark for evaluating whether LLMs can be constrained to produce reliable, non-hallucinatory, and complete structured outputs. STPA has seen ubiquitous research and use in the system safety literature and although out of scope of this paper, fuller descriptions can be found elsewhere\cite{leveson2011engineering}

\begin{figure*}[ht]
    \centering
    
    \begin{subfigure}[b]{0.32\textwidth}
        \centering
        \includegraphics[trim=0 0 0 00, clip,width=\linewidth]{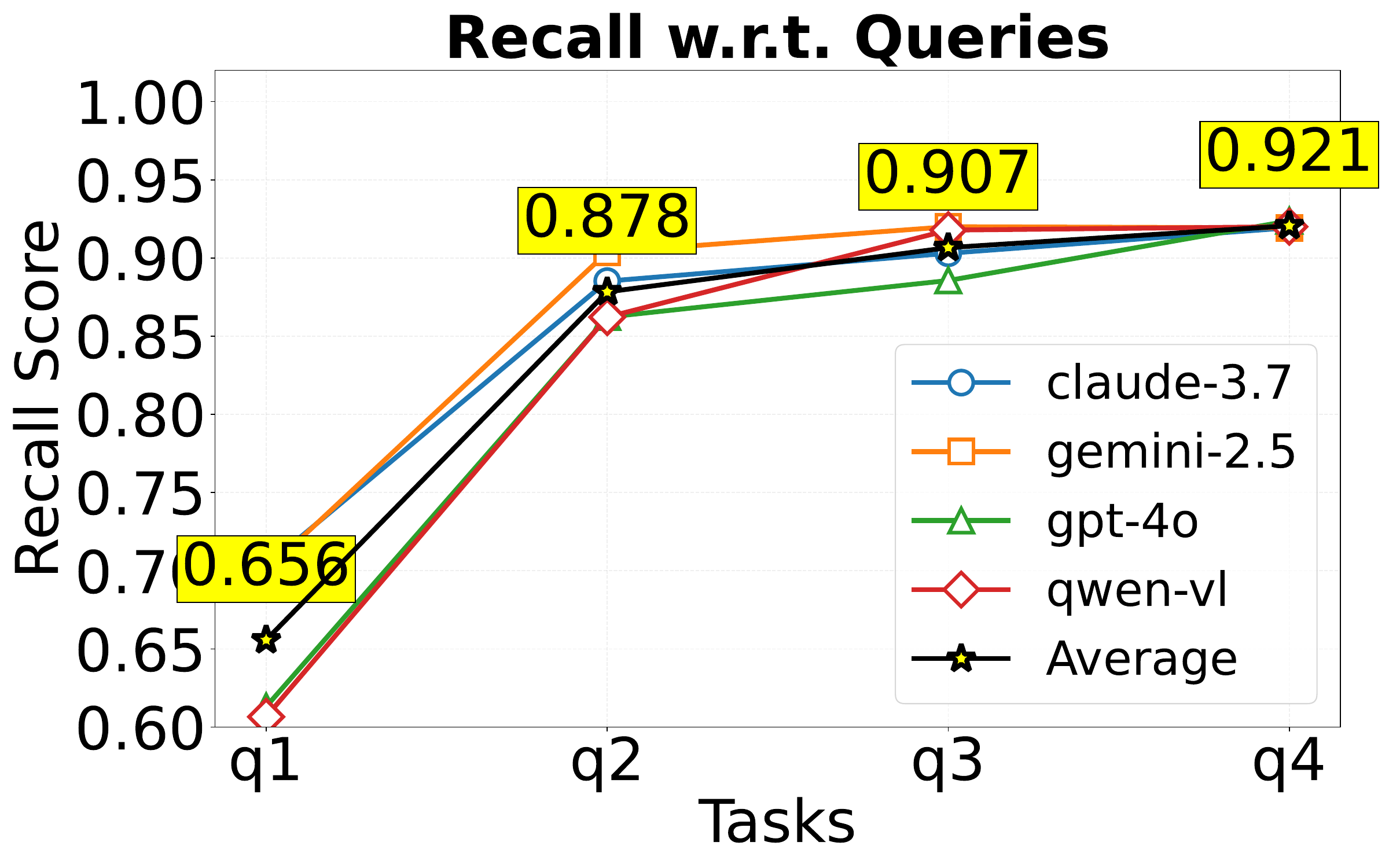}
        \caption{Recall Performance across Extraction Numbers}
        \label{fig:recall_performance}
    \end{subfigure}
    \hfill
    \begin{subfigure}[b]{0.32\textwidth}
        \centering
        \includegraphics[trim=0 0 0 00, clip,width=\linewidth]{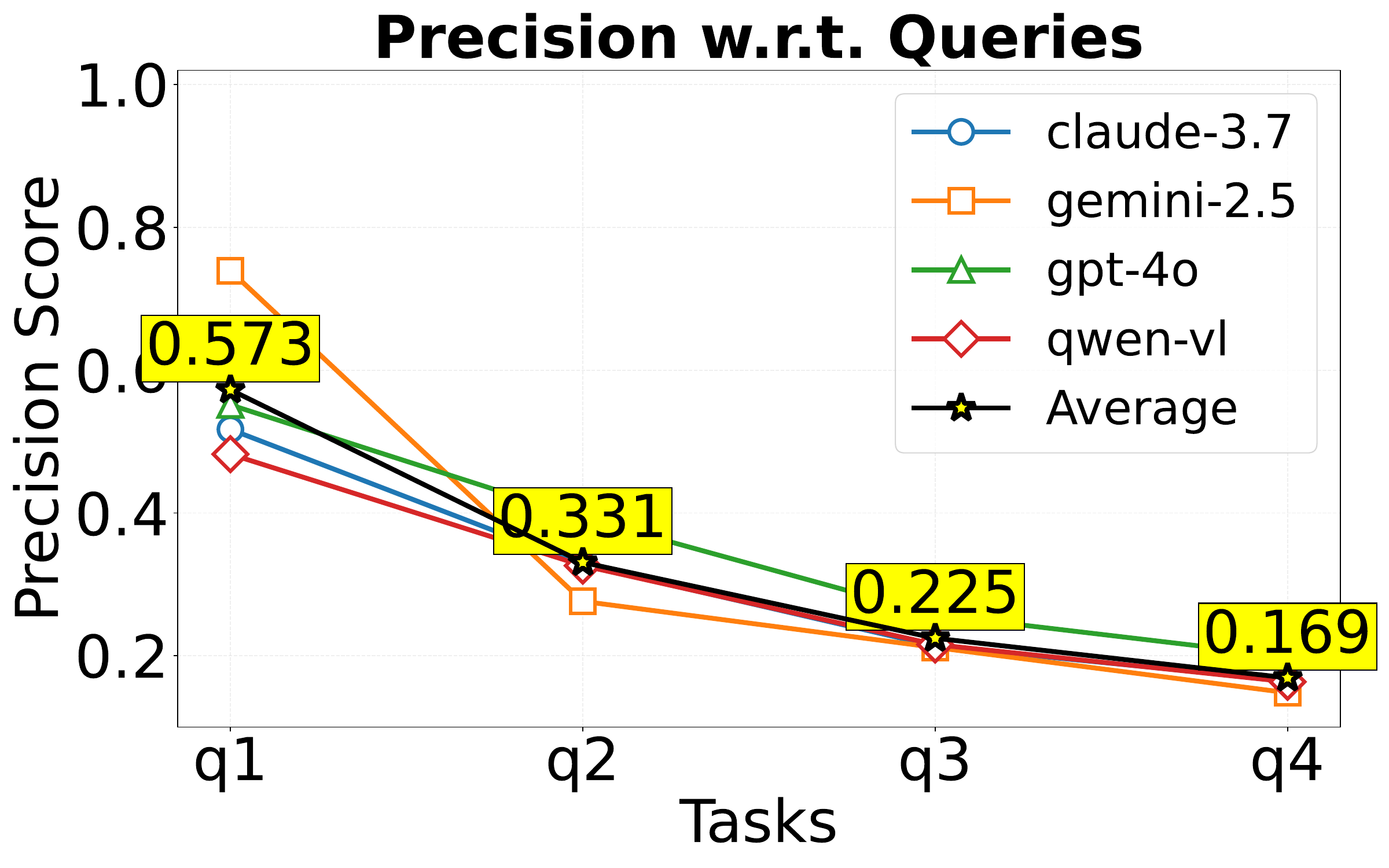}
        \caption{Precision Performance across Extraction Numbers}
        \label{fig:precision_performance}
    \end{subfigure}
    \hfill
    \begin{subfigure}[b]{0.32\textwidth}
        \centering
        \includegraphics[trim=0 0 0 00, clip,width=\linewidth]{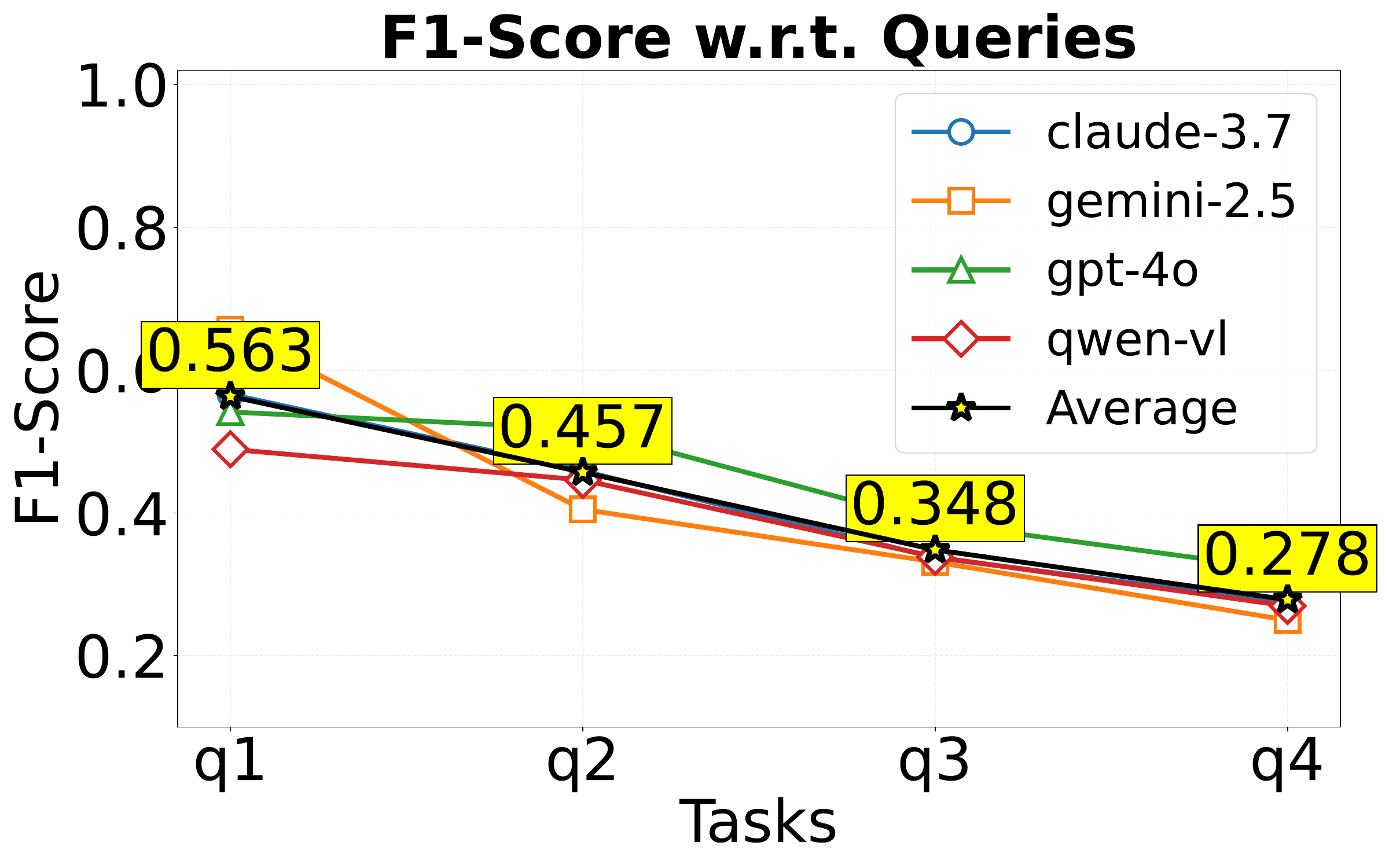}
        \caption{F1-Score Performance across Extraction Numbers}
        \label{fig:f1_performance}
    \end{subfigure}
    
    \caption{Extraction performance with varying numbers of independent extraction queries (q1 to q4), without validation}
    \label{Extraction performance with varying numbers of independent extraction queries (q1 to q4), without validation}
\end{figure*}
    
    
\begin{figure*}[ht]
    \centering
    
    \begin{subfigure}[b]{0.32\textwidth}
        \centering
        \includegraphics[trim=0 0 0 00, clip,width=\linewidth]{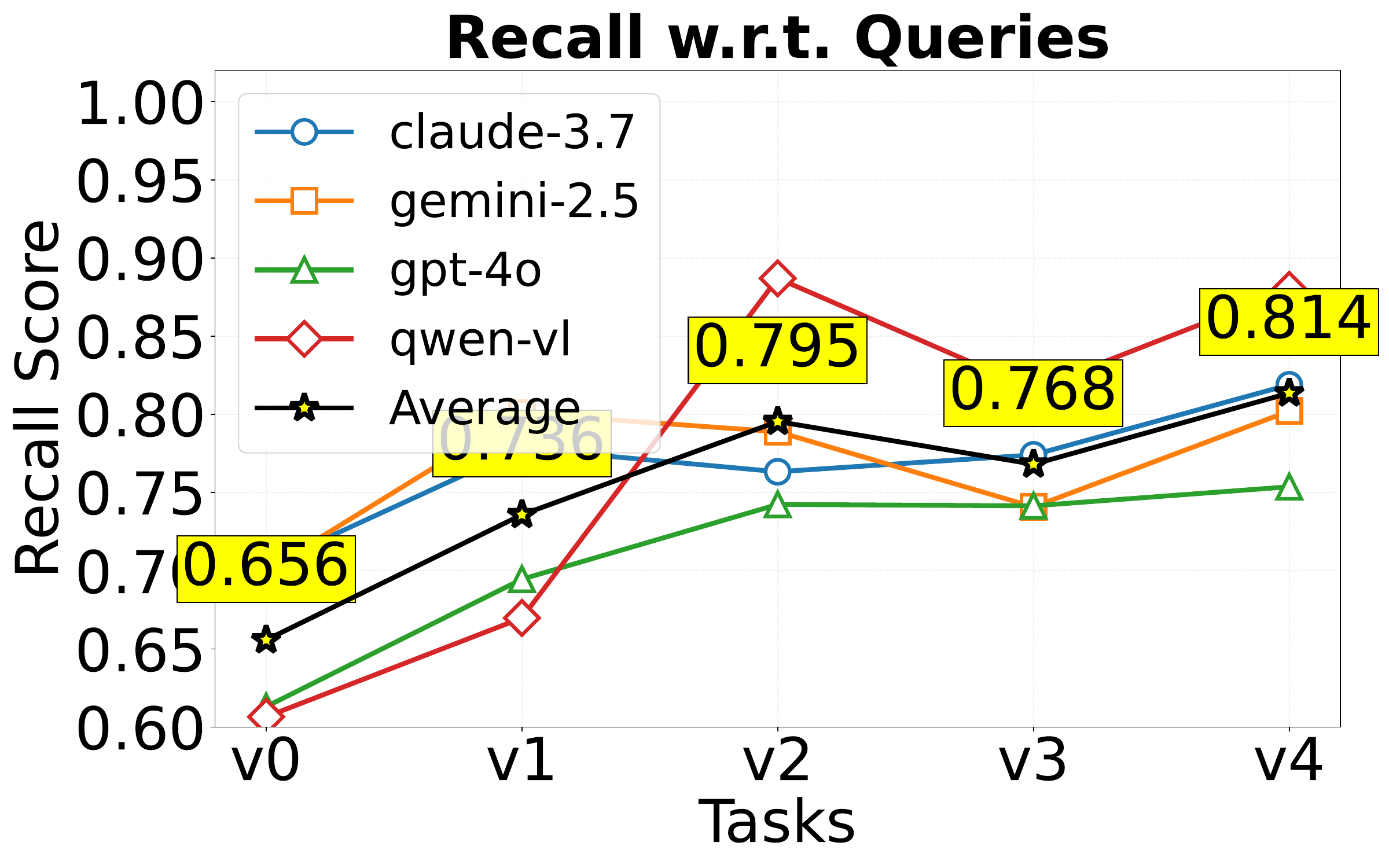}
        \caption{Recall Performance across Extraction Numbers}
        \label{fig:recall_performance2}
    \end{subfigure}
    \hfill
    \begin{subfigure}[b]{0.32\textwidth}
        \centering
        \includegraphics[trim=0 0 0 00, clip,width=\linewidth]{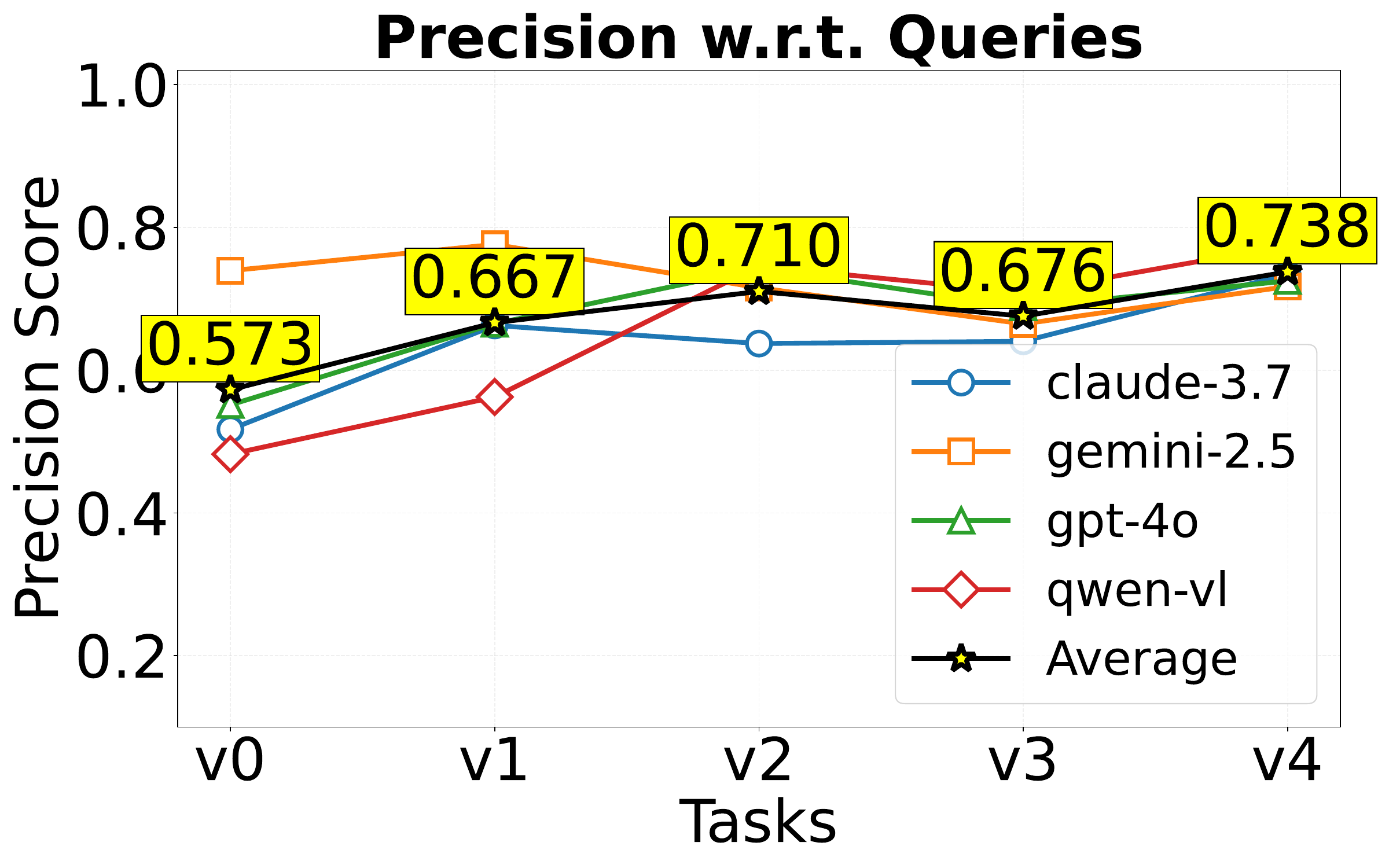}
        \caption{Precision Performance across Extraction Numbers}
        \label{fig:precision_performance2}
    \end{subfigure}
    \hfill
    \begin{subfigure}[b]{0.32\textwidth}
        \centering
        \includegraphics[trim=0 0 0 00, clip,width=\linewidth]{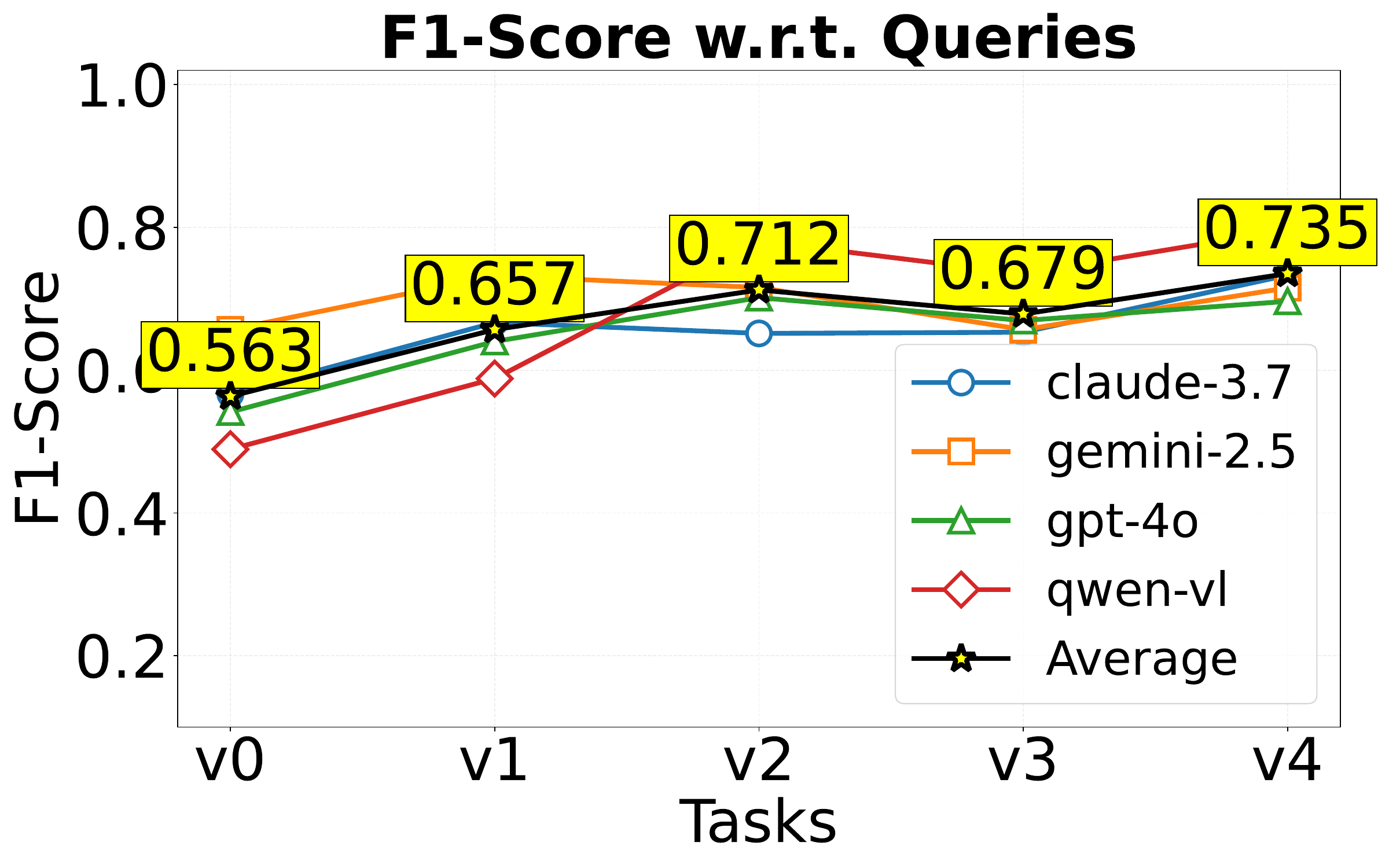}
        \caption{F1-Score Performance across Extraction Numbers}
        \label{fig:f1_performance2}
    \end{subfigure}
    
    \caption{Effect of validation intensity (v0 to v4 independent queries) on precision and recall after extraction}
    \label{Effect of validation intensity (v0 to v4 independent queries) on precision and recall after extraction}
\end{figure*}
The detail of STPA rule is specified in Appendix \ref{STPA reasoning and enumeration process}. 
From a computational perspective, given a system description, the analysis proceeds in 2 main steps.
1) to identify all control-related components in the system, such as sensors, controllers, and actuators. We study how different extraction strategies affect the correctness of this component discovery step. 2) 
Using a classical combinatorial procedure, the identified components as well as all their possible unsafe actions are permuted to illustrate every safety concerns. This enumeration can be as simple as a nested for loop.

Notably, traditional STPA analysis is mainly done manually, and no structured dataset of STPA results exists in the literature. This paper offers not only a formalized dataset based on 100 existing patents that describer various technical or engineered systems. This paper also provides automated, rigorous LLM-based STPA methods. In other words, we propose the \textit{ first method for systematically analyzing and automating STPA, and the first STPA dataset. The dataset and the code for this paper are also in our code repository}.


\begin{table}[t]
\centering
\caption{Comparative Average Performance Summary}
\label{tab:avg_performance_summary}
\begin{tabular}{llccc}
\toprule
\textbf{Task} & \textbf{Stage} & \textbf{Recall} & \textbf{Precision} & \textbf{F1\_Score} \\
\midrule
\multirow{4}{*}{Extraction}
 & q1 & 0.6557 & 0.5725 & 0.5634 \\
 & q2 & 0.8783 & 0.3308 & 0.4569 \\
 & q3 & 0.9066 & 0.2246 & 0.3484 \\
 & q4 & 0.9206 & 0.1690 & 0.2784 \\
\midrule
\multirow{5}{*}{Validation}
 & v0 & 0.6557 & 0.5725 & 0.5634 \\
 & v1 & 0.7358 & 0.6666 & 0.6568 \\
 & v2 & 0.7954 & 0.7103 & 0.7124 \\
 & v3 & 0.7681 & 0.6756 & 0.6786 \\
 & v4 & 0.8136 & 0.7377 & 0.7353 \\
\bottomrule
\end{tabular}
\end{table}

\paragraph{Experimental Protocol.}
In the extraction phase, we query the model separately using 1 to 4 independent prompts.
Each prompt is self-contained and asks the model to produce the complete set of target answers from the document—they are not designed to extract partial information incrementally.
For example, in our STPA task, every prompt explicitly requests the model to list all sensors (or all actuators) mentioned in the specification, rather than querying sub‑categories.
The responses from the different prompts are then merged via union to form the extracted candidate set for future validations.
We evaluate the extractions obtained with 1 to 4 prompts against the ground‑truth answers.
For each query budget, the specific prompts are randomly sampled from a predefined pool of 4 questions, and results are averaged over these samples. This ensures that performance differences reflect the number of independent extraction attempts rather than the particular wording or ordering of individual prompts.
Table \ref{tab:avg_performance_summary} summarizes the results across different numbers of extraction prompts; detailed per‑model performance is provided in Table \ref{Performance Comparison of 1 vs Multiple Extraction for Different Models} in Appendix \ref{Detailed Performance Tables for Extraction and Validation Stages}. (The full set of prompts is listed in Appendix \ref{Extraction and Verification Stage Query Design}.) 

\paragraph{Impact of Multi-Query Extraction.}
As shown in Figure \ref{Extraction performance with varying numbers of independent extraction queries (q1 to q4), without validation}, we observe a consistent and monotonic increase from 65.6\% to 92.1\% in recall as the number of extraction queries grows from 1 to 4.
This trend shows independent extraction queries reduce stochastic omission errors by covering complementary perspectives of the document. Notably, the recall improvement follows a characteristic \emph{rapid-rise-then-saturation} pattern.
Introducing a second extraction query yields a sharp recall increase, while additional queries beyond the third contribute only marginal gains. 
Across models, all recall approaches a stable regime around $92\%$. From a cost–performance perspective, introducing a second extraction query already captures the majority of recoverable entities, yielding the largest marginal gain in recall. While additional queries further improve recall, their benefits diminish rapidly, suggesting that two independent queries represent a cost-effective operating point, whereas three or more queries are primarily justified when near-saturation recall is required.

In contrast, precision and F1 decreases steadily as more extraction queries are added.
This behavior is expected and intentional: recall-oriented extraction prioritizes coverage and admits redundant or spurious candidates.

\paragraph{Impact of Multi-Query Validation}
 Figure \ref{Effect of validation intensity (v0 to v4 independent queries) on precision and recall after extraction}, the number of extraction queries is fixed to 4 and the number of validation queries changes from 0 to 4. There is a detailed Table \ref{Performance Comparison of 1 vs Multiple validations for Different Models_a2} is shown in Appendix \ref{Detailed Performance Tables for Extraction and Validation Stages}. 
The prompts of the validation queries are also randomly selected from a pool of 4 validation questions, so the performance change shows the impact of different number of validations not the wording of the particular prompts.
A single validation could incorrectly reject true positives, thereby forfeiting the high recall obtained from the multi-query extraction. 
To mitigate this risk, we employ a majority-voting scheme over $m_v$ independent validation queries per candidate. 
This ensemble approach reduces the probability of erroneous rejection, thereby safeguarding recall while simultaneously boosting precision through the removal of false positives.

As shown in Figure \ref{Effect of validation intensity (v0 to v4 independent queries) on precision and recall after extraction}, relying solely on extraction without validation (v0) results in uniformly low performance across recall, precision, and F1-score. Introducing even a single validator leads to a substantial improvement in all three metrics, while adding a second validator further boosts performance. Beyond two validators, the gains become marginal and exhibit slight fluctuations, indicating diminishing returns. These results highlight the necessity of multi-query validation and suggest that re-validating extracted candidates against the source document—even with a single additional query—can significantly enhance overall performance. \textit{Despite its effectiveness, such post-extraction validation is rarely adopted in existing LLM pipelines, underscoring the practical value of our proposed “back-check” validation strategy.}

"Second is the best" mirrors the observation in the extraction stage, where the largest all parameter increment is obtained by introducing a second extraction or validation query. 
Together, these results indicate that a \emph{sweet spot} exists at two independent queries—whether for extraction or validation—beyond which additional queries contribute little additional benefit. 
Thus, while the multi-query design is essential for reducing stochastic omission and hallucination errors, a relatively small ensemble size (two or three) is sufficient to capture the majority of the achievable performance uplift in both stages.

\paragraph{The Impact of Enumeration.}
Once all relevant components are reliably identified, the remaining challenge lies in systematically enumerating all possible unsafe control action (UCA), a complete ordered permutations over all components and all the safety concerns.
Rather than relying on free-form generation, EVE adopts an explicit permutation-based enumeration strategy, implemented as a nested-loop procedure over components and possible unsafe types.
This design choice is essential, as the combinatorial space quickly exceeds the effective generation capacity of LLMs.

Our experiments further confirm the necessity of enumeration as a structural backbone for long-form reasoning.
Without enumeration, even when prompted to exhaustively describe all possible safety concerns, LLMs tend to produce only a small subset of plausible cases.
In contrast, real-world control systems typically involve multiple sensors, controllers, and actuators, leading to hundreds of distinct UCA combinations that must be explicitly considered.The number of UCAs grows proportionally to
$N_S \times N_C \times N_A \times F$,
where $N_S$, $N_C$, and $N_A$ denote the numbers of sensors, controllers, and actuators, respectively,
and $F$ denotes the number of unsafe modes.

As shown in Table~\ref{tab:stpa_example_count_comparison}, introducing enumeration increases the number of covered safety scenarios by orders of magnitude, demonstrating that structured enumeration is indispensable for achieving completeness beyond the limits of single-pass generation.

\begin{table}[t]
\centering
\caption{STPA example count comparison with and without state machine.
No-State reports average examples per file; State-Machine reports average trajectories per file.}
\label{tab:stpa_example_count_comparison}
\setlength{\tabcolsep}{5pt}
\begin{tabular}{lcc}
\hline
\textbf{Model} & \textbf{No-Enumeration} & \textbf{Enumeration} \\
\hline
Claude-3.7 Sonnet & 4.03 & 87.4 \\
Gemini-2.5 Pro    & 4.23 & 77.8 \\
GPT-4o            & 3.77 & 81.4 \\
Qwen VL-Max       & 4.13 & 220.0 \\
\hline
\textbf{Avg.}     & \textbf{4.02} & \textbf{110.8} \\
\hline
\end{tabular}
\end{table}

Table~\ref{tab:stpa_example_count_comparison} reports the number of enumerated results produced by different models.
The variation across models primarily stems from differences in the numbers of components extracted in the upstream stages, which directly determine the size of the subsequent enumeration space.
In this table, all results are obtained under the strongest configuration, using four independent extraction queries followed by four independent validation queries.

Notably, the Qwen model yields a substantially larger number of enumerated cases than the other models.
As shown in Figure~3 and Figure~4, Qwen exhibits relatively weak performance under single-query settings, but benefits disproportionately from multi-query extraction and validation.
When combined with the proposed EVE framework, Qwen achieves higher recall, precision, and F1-score than the other models, resulting in a larger and more complete set for the enumerations.
This observation suggests that, beyond raw single-query accuracy, certain models are particularly well suited to structured, multi-query reasoning pipelines as shown in our EVE framework.
More broadly, it points to a promising direction for future work, where model–framework compatibility may play a critical role in maximizing reliability and correctness.

%% file: limitations.tex
\section{Limitations and Boundary Conditions}
\paragraph{The performance of EVE is bounded by inherent limits of language-based reasoning.}
It is easy to observe that, despite our best efforts to systematically address safety concerns, the performance of EVE saturates below perfect reliability.
In our experiments, recall peaks at approximately 92\%, while precision plateaus around 73.8\%.
Although these results represent substantial improvements over the single-query baseline—yielding up to 40\% relative gains in recall and up to 36\% gains in precision—the resulting performance still falls short of the strict requirements demanded by safety-critical applications.

This limitation reflects a more fundamental constraint.
LLMs operate over natural language, which is inherently ambiguous and underspecified: correctness is rarely binary, and semantic boundaries are often fuzzy rather than exact~\citep{bender2020climbing}.
As a consequence, even extensive multi-query extraction and validation cannot fully eliminate residual uncertainty.
This gap is manifestation of the intrinsic limits of language-based statistical modeling. Addressing this limitation likely requires complementary mechanisms beyond language modeling.
One promising direction is to integrate formal methods or symbolic reasoning tools that can externally verify safety constraints and rule out logically invalid cases.
Such approaches would require transforming language-derived outputs into explicit mathematical or logical representations, enabling rigorous checking that is orthogonal to probabilistic inference.
In this view, ensemble learning over LLM queries is a necessary but insufficient condition for safety, and must ultimately be coupled with external verification tools to achieve stronger guarantees.
In addition, EVE is intentionally designed to be model-agnostic and compatible with general-purpose LLMs.
While this design choice ensures broad applicability, it also constrains access to deep, tacit domain knowledge specific to control systems, which could potentially further improve the correctness.

\paragraph{The Underestimated Challenge of Entity Clustering}
A critical yet often under-discussed limitation lies not in the \emph{binary validation} of candidate correctness, but in the semantic clustering of aliases.
Binary validation is relatively straightforward. Queries such as “Is X a valid answer, based on the context?” are well-defined, document-grounded classification tasks. With the mainstream models, accuracy in this step can be high. (In practice, wrong answer is very rarely observed). However, clustering is inherently ambiguous and technically challenging. The true difficulty emerges when multiple extracted surface forms (e.g., “front radar,” “obstacle detection sensor,” “LiDAR unit”) refer to the same underlying entity in the document. Precision degradation mainly arises from the absence of a deterministic decision boundary for semantic equivalence, rather than incorrect acceptance of invalid elements.
Precisely, Dictionary-based methods fails due to lack of context; nearly any two distinct strings are trivially considered non‑synonymous. Using an LLM as a judge for synonymy is overly conservative and unreliable. When asked in isolation “Are A and B the same thing?,” state‑of‑the‑art models tend to err toward “no,” rarely affirming synonymy unless the phrases are nearly identical, leading to excessive duplication. The only robust method is contextual grounding: The reliable approach—and the one we adopt—is to ask the model, \emph{within the specific document context}, whether two mentions refer to the same component. This shifts the problem from abstract synonymy to concrete co‑reference, yielding significantly more accurate clustering. Nevertheless, this itself requires careful prompt design and is not foolproof.
\section{Conclusion}

In summary, we present EVE, a structured framework for document-grounded reasoning that improves both faithfulness and completeness by explicitly decomposing answer construction into three stages:
(1) Exhaustive extraction, where multiple independent queries are used to maximize recall of candidate elements;
(2) validation, where majority voting over multiple binary queries filters out false positives while preserving recall; and
(3) enumeration, where a systematically generated reasoning skeleton guides the language model to produce a complete and ordered answer.

Unlike free-form prompting, EVE constrains the generation process to a verifiable, skeleton-driven pipeline, delegating fine-grained content generation only after the structural reasoning is complete.
Empirically, this design enables consistent and simultaneous improvements in recall, precision, and F1-score: under the strongest configuration with four extraction and four validation queries, recall and precision increase by up to 24\% and 29\%, respectively, with a corresponding 31\% gain in F1-score. This effectively breaks the long-standing trade-off between coverage and accuracy typical of single-pass LLM generation. At the same time, we emphasize that EVE does not eliminate the fundamental limits of language-based reasoning.At the same time, we emphasize that EVE does not eliminate the fundamental limits of language-based reasoning.
Addressing these limitations—potentially through tighter integration with formal verification or domain-specific
reasoning tools—remains an important direction for future work.

%% file: Appendix.tex
\clearpage
\appendix

\section{Practical Guidelines for Designing Effective LLM Queries}\label{Practical Guidelines for Designing Effective LLM Queries}
The reliability of any LLM-based analysis, including structured frameworks, critically depends on the quality of the initial queries. Vague or poorly structured prompts lead to ambiguous, incomplete, or irrelevant outputs, undermining subsequent verification and reasoning steps. This appendix provides concrete, actionable guidelines with examples to craft precise and directive prompts.

Guideline 1: Use Structured Verb-Fronting
Begin each key instruction with a strong, imperative verb that clearly states the action required. Avoid nominalizations or passive phrasing.

Weak (Unfocused): "A discussion of analog input interfaces and their possible sensors."

Strong (Directive): "List all analog input (AI) interfaces. For each, identify the specific types of physical sensors that could be connected."

Why it works: Verbs like “List,” “Identify,” “Analyze,” “Compare,” or “Extract” set an unambiguous task, reducing the LLM’s discretion to “discuss” the topic in an open-ended manner.

Guideline 2: Embed Explicit Micro-Questions
Transform broad topic headings into specific, answerable sub-questions. This acts as a “reasoning anchor” for the LLM.

Weak (Topic List): "Consider: 1. Analog inputs. 2. Digital inputs. 3. Communication buses."

Strong (Question List): "For each interface category, answer: 1. What specific sensor models connect to analog inputs? 2. What state detection devices connect to digital inputs? 3. What data streams are transmitted via communication buses?"

Why it works: It forces the LLM to produce factual answers to each micro-question, ensuring comprehensive coverage of the topic and preventing abstract summarization.

Guideline 3: Employ Negative Clarification
Explicitly state what should not be included to narrow the scope and prevent common failure modes.

Weak (Ambiguous Scope): "List system components."

Strong (Bounded Scope): "List all physical hardware components (e.g., controllers, sensors, actuators). Exclude software modules, logical processes, and data flows."

Why it works: It pre-emptively counters the LLM’s tendency to over-generalize or include conceptually related but task-irrelevant entities, significantly improving precision.

Guideline 4: Enforce Output Format Constraints
Specify a strict output template (e.g., JSON, XML, a markdown table) with predefined fields. This structures the LLM’s thought process and enables automated parsing.

Weak (Free-form): "Tell me about the sensors."

Strong (Structured): "Output a JSON array. Each object must have these keys:
\begin{verbatim}
{
  "component_name": [...],
  "component function": [...],
}
\end{verbatim}
Why it works: It moves the task from “generate text about X” to “fill in a data structure for X,” which is a more constrained and less hallucination-prone operation for the LLM.

Guideline 5: Assign a Specific Analytical Role
Frame the task by assigning the LLM a professional or functional role that shapes its reasoning perspective.

Weak (Generic): "Analyze this control system for hazards."

Strong (Role-based): "You are a safety certification engineer conducting a formal hazard analysis. Apply the STPA methodology to analyze this control system. Your output must reflect regulatory-grade rigor."

Why it works: The role primes the model with a specific set of assumptions, priorities, and jargon, leading to more context-appropriate and rigorous outputs.

Summary: Applying these guidelines transforms prompts from vague invitations for discussion into precise engineering instructions. This directly increases the initial precision and actionability of the LLM’s output, which is then amplified by downstream structured reasoning and verification modules.

Guideline 6: Prefer Structural Prompting over Instruction Tuning

Simply instructing the model to be clear, detailed, or careful often has limited effect on output quality, with results remaining largely invariant to instruction phrasing. Such instruction-level variations rarely alter the effective decision space of the model and therefore provide limited control over completeness or consistency.

Weak (Instruction-based): Please carefully and clearly analyze the hazards in this control system.''

Strong (Structurally constrained): Fill in the following template for each identified hazard: $\langle$error type$\rangle$, impacting $\langle$component$\rangle$, resulting in $\langle$consequence$\rangle$, which is dangerous to $\langle$hazard$\rangle$.''

Why it works:
Structural prompting transforms the task from open-ended explanation into constrained decision completion. By requiring the model to explicitly fill predefined fields, the output space is restricted to valid and interpretable configurations. This improves output consistency and reduces ambiguity and omissions, particularly in high-rigor reasoning settings where structured justification is required.
\section{Extraction and Validation Stage Query Design}
\label{Extraction and Verification Stage Query Design}
\subsection{Sensor Identification}

Sensor extraction is performed in two stages:

\textbf{Recall Stage.} Five questions are used to maximize coverage:  
(1) direct extraction from text,  
(2) interface-based inference,  
(3) functional requirement deduction,  
(4) control-loop tracing, and  
(5) optional diagram-based reasoning.  

\textbf{Precision Stage.} Four verification questions refine the results by:  
(1) validating true physical sensors,  
(2) clustering equivalent names,  
(3) adversarial completeness checking, and  
(4) incremental consolidation across perspectives.

\subsection{Actuator Identification}

The same multi-perspective strategy is applied to actuators.  
Actuator recall focuses on identifying final physical components that directly influence external variables using:  
(1) STPA boundary reasoning,  
(2) energy-flow analysis,  
(3) signal-to-physical conversion tracing, and  
(4) functional deduction.  

Precision verification similarly validates real actuators, clusters aliases, removes non-actuators, and consolidates results.

\subsection{Voting and Consolidation}

To integrate the results from multiple perspectives, a lightweight consolidation process is applied based on four steps: 
(1) prioritizing results from earlier and more reliable questions, 
(2) using later questions mainly for cross-verification and completeness checking, 
(3) merging candidate entities that refer to the same physical component, and 
(4) assigning confidence according to the number of perspectives that support each entity. 
The final output aggregates all validated answers and produces a unified entity summary in a consistent structured format.
All questions return results using similar schema:

\begin{verbatim}
{
  "component_name": [...],
  "component function": [...],
}
\end{verbatim}
\section{Detailed Performance Tables for Extraction and Validation Stages}
\label{Detailed Performance Tables for Extraction and Validation Stages}
\begin{sidewaystable}[htbp]
\centering
\caption{Performance Comparison of 1 vs Multiple Extraction for Different Models}
\label{Performance Comparison of 1 vs Multiple Extraction for Different Models}
\begin{tabular}{|c|c|c|c|c|c|c|c|c|c|c|c|c|}
\hline
\multirow{2}{*}{Model} & \multicolumn{3}{|c|}{q1} & \multicolumn{3}{|c|}{q2} & \multicolumn{3}{|c|}{q3} & \multicolumn{3}{|c|}{q4} \\
\cline{2-13}
 & R & P & F & R & P & F & R & P & F & R & P & F \\
\hline
claude & 0.703 & 0.517 & 0.566 & 0.885 & 0.327 & 0.459 & 0.903 & 0.214 & 0.337 & 0.919 & 0.165 & 0.274 \\
\hline
gemini & 0.700 & 0.739 & 0.657 & 0.904 & 0.276 & 0.405 & 0.920 & 0.212 & 0.331 & 0.919 & 0.148 & 0.250 \\
\hline
gpt-4o & 0.613 & 0.551 & 0.541 & 0.862 & 0.393 & 0.518 & 0.886 & 0.257 & 0.386 & 0.924 & 0.199 & 0.320 \\
\hline
qwen & 0.607 & 0.482 & 0.489 & 0.862 & 0.326 & 0.446 & 0.918 & 0.216 & 0.338 & 0.920 & 0.164 & 0.270 \\
\hline
average & 0.656 & 0.573 & 0.563 & 0.878 & 0.331 & 0.457 & 0.907 & 0.225 & 0.348 & 0.921 & 0.169 & 0.278 \\
\hline
average(\%) & --- & --- & ---  & 34\% & -42\% & -19\% & 38\% & -61\% & -38\% & 40\% & -70\% & -51\% \\
\hline
\end{tabular}
\end{sidewaystable}

\begin{sidewaystable}[htbp]
\centering
\caption{Performance Comparison of 1 vs Multiple validations for Different Models}
\subcaption{Number of validation equals to the number of extraction}
\label{Performance Comparison of 1 vs Multiple validations for Different Models_a1}
\begin{tabular}{|
p{1cm}|
p{0.6cm}
p{0.6cm}
p{0.6cm}|
p{0.6cm}
p{0.6cm}
p{0.6cm}|
p{0.6cm}
p{0.6cm}
p{0.6cm}|
p{0.6cm}
p{0.6cm}
p{0.6cm}|
p{0.6cm}
p{0.6cm}
p{0.6cm}|
}
\hline
\multirow{2}{*}{Model} & \multicolumn{3}{|c|}{v0} & \multicolumn{3}{|c|}{v1} & \multicolumn{3}{|c|}{v2} & \multicolumn{3}{|c|}{v3} & \multicolumn{3}{|c|}{v4} \\
\cline{2-16}
 & R & P & F & R & P & F & R & P & F & R & P & F & R & P & F \\
\hline
claude
& 0.703 & 0.517 & 0.566 
& 0.768 & 0.593 & 0.630 
& 0.841 & 0.779 & 0.728 
& 0.877 & 0.729 & 0.743 
& 0.819 & 0.734 & 0.733 \\
\hline
gemini
& 0.700 & 0.739 & 0.657 
& 0.702 & 0.631 & 0.624 
& 0.796 & 0.817 & 0.755 
& 0.724 & 0.745 & 0.682 
& 0.802 & 0.717 & 0.715 \\
\hline
gpt-4o 
& 0.613 & 0.551 & 0.541 
& 0.712 & 0.671 & 0.651 
& 0.717 & 0.828 & 0.725 
& 0.763 & 0.846 & 0.746 
& 0.754 & 0.725 & 0.696 \\
\hline
qwen 
& 0.607 & 0.482 & 0.489 
& 0.770 & 0.623 & 0.663 
& 0.828 & 0.692 & 0.707 
& 0.855 & 0.741 & 0.742 
& 0.880 & 0.774 & 0.797 \\
\hline
average 
& 0.656 & 0.573 & 0.563 
& 0.738 & 0.630 & 0.642 
& 0.796 & 0.779 & 0.729 
& 0.805 & 0.765 & 0.728 
& 0.814 & 0.738 & 0.735 \\
\hline
(\%) 
& --- & --- & --- 
& 13\% & 10\% & 14\% 
& 21\% & 36\% & 29\% 
& 23\% & 34\% & 29\% 
& 24\% & 29\% & 31\% \\
\hline
\end{tabular}
\end{sidewaystable}
\begin{sidewaystable}[htbp]
\centering
\caption{Performance Comparison of 1 vs Multiple validation for Different Models}
\subcaption{The number of extraction=4}
\label{Performance Comparison of 1 vs Multiple validations for Different Models_a2}
\label{tab:1_vs_multiple_filtering}
\begin{tabular}{|
p{1cm}|
p{0.6cm}p{0.6cm}p{0.6cm}|
p{0.6cm}p{0.6cm}p{0.6cm}|
p{0.6cm}p{0.6cm}p{0.6cm}|
p{0.6cm}p{0.6cm}p{0.6cm}|
p{0.6cm}p{0.6cm}p{0.6cm}|
}
\hline
\multirow{2}{*}{Model}
& \multicolumn{3}{c|}{v0}
& \multicolumn{3}{c|}{v1}
& \multicolumn{3}{c|}{v2}
& \multicolumn{3}{c|}{v3}
& \multicolumn{3}{c|}{v4} \\
\cline{2-16}
& R & P & F
& R & P & F
& R & P & F
& R & P & F
& R & P & F \\
\hline
claude
& 0.703 & 0.517 & 0.566
& 0.778 & 0.663 & 0.667
& 0.763 & 0.637 & 0.652
& 0.774 & 0.640 & 0.653
& 0.819 & 0.734 & 0.733 \\
\hline
gemini
& 0.700 & 0.739 & 0.657
& 0.801 & 0.776 & 0.732
& 0.789 & 0.715 & 0.716
& 0.741 & 0.665 & 0.657
& 0.802 & 0.717 & 0.715 \\
\hline
gpt-4o
& 0.613 & 0.551 & 0.541
& 0.695 & 0.665 & 0.640
& 0.742 & 0.743 & 0.702
& 0.742 & 0.688 & 0.670
& 0.754 & 0.725 & 0.696 \\
\hline
qwen
& 0.607 & 0.482 & 0.489
& 0.670 & 0.562 & 0.588
& 0.887 & 0.746 & 0.781
& 0.816 & 0.709 & 0.735
& 0.880 & 0.774 & 0.797 \\
\hline
average
& 0.656 & 0.573 & 0.563
& 0.736 & 0.667 & 0.657
& 0.795 & 0.710 & 0.712
& 0.768 & 0.676 & 0.679
& 0.814 & 0.738 & 0.735 \\
\hline
(\%)
& --- & --- & ---
& 12\% & 16\% & 17\%
& 21\% & 24\% & 26\%
& 17\% & 18\% & 20\%
& 24\% & 29\% & 31\% \\
\hline
\end{tabular}
\end{sidewaystable}

\textbf{Model abbreviations.}

\emph{claude} denotes anthropic/claude-3.7-sonnet;

\emph{gemini} denotes google/gemini-2.5-pro;

\emph{gpt-4o} denotes openai/gpt-4o;

\emph{qwen} denotes qwen/qwen-vl-max.

\textbf{Metric abbreviations.}
R, P, and F denote Recall, Precision, and F1-score, respectively.
\input{Appendix2}

\section{Parameter selection table}
\label{tab:param_justification}
\begin{table*}[ht!]
\centering
\caption{Parameter values and their justifications.}

\begin{tabular}{p{0.1\textwidth} p{0.6\textwidth}}
\toprule
\textbf{Value} & \textbf{Literature Support \& Rationale} \\
\midrule
4 &
Represents a simple but realistic control unit: one controller, two sensors, one actuator.
A fundamental building block of larger systems. \\

0.7-0.8 &
Conservative estimate based on:
1) LLM zero-shot F1 scores on IE tasks ($\sim$70\%) with recall typically lower than precision \citep{he2023exploring};
2) Reported ``completeness'' deficits in evidence extraction \citep{lyu2024faithful};
3) Performance of advanced prompting methods like Chain-of-Thought on multi-step problems. \\

0.2 &
LLM-as-a-judge accuracy in verification/fact-checking tasks is often reported in the 80--85\% range
for non-adversarial cases \citep{manakul2023selfcheckgpt}.
We use an error rate of 0.2 (80\% accuracy). \\

0.85 &
Constrained, context-specific generation significantly improves accuracy.
Program-aided approaches show jumps to $>$90\% accuracy on well-defined sub-tasks \citep{cheng2023pal}.
We use a slightly conservative value. \\
\bottomrule
\end{tabular}
\end{table*}

%% file: Appendix2.tex
\section{STPA reasoning and enumeration process}
\label{STPA reasoning and enumeration process}
\begin{figure}[ht]
    \centering
    \includegraphics[width=\linewidth]{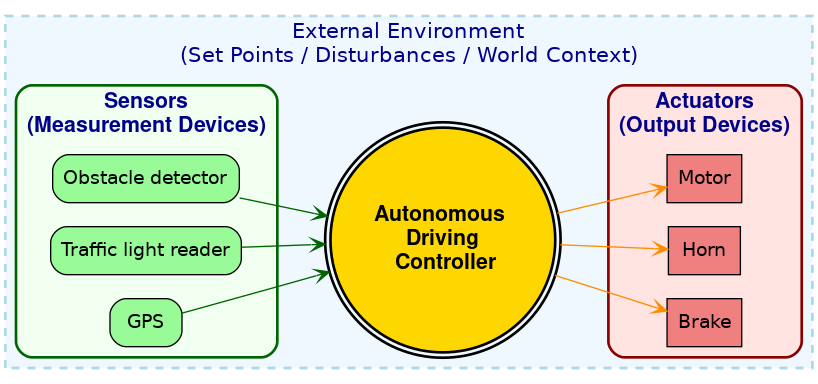}
    \caption{Illustration of a control diagram of a self-driving car}
    \label{control_system}
    \includegraphics[width=\linewidth]{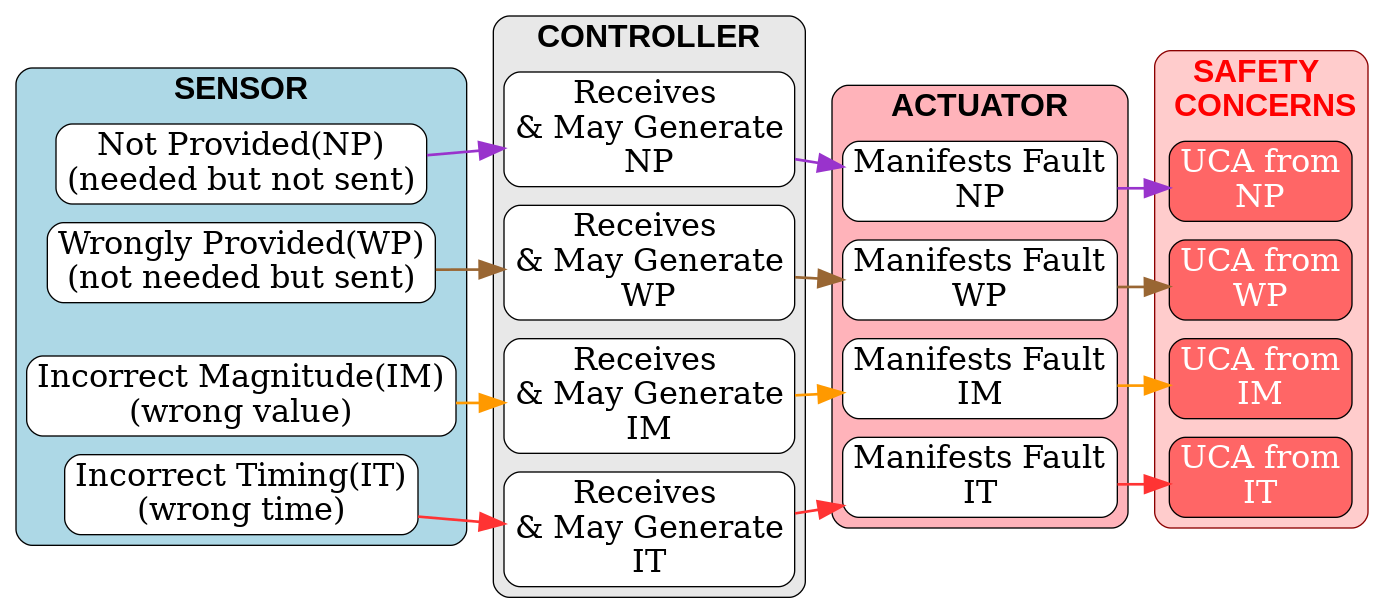}
    \caption{STPA Unsafe Control Action Propagation Diagram. The UCAs can be originated from close to out put nodes, such as actuator. The UCAs can also be originated from earlier nodes, controllers or sensors. The UCAs can only transmit within the same category.}
    \label{Propagation}
\end{figure}

STPA reasoning is based on component-wise state transitions within the system. The analysis focuses solely on externally observable states, rather than internal implementation details. As exemplified Figure \ref{control_system}, each system is modeled as a control structure composed of sensors, controllers, and actuators element-wisely. Each control component can lead to four types of safety concerns or UCA:
Not Provided When Needed (NP),
Wrongly Provided When Not Needed (WP),
Incorrect Magnitude (IM), and
Incorrect Timing (IT). Under this control-theoretic view, an UCA occurring at the state of a component propagates to downstream components along directed control dependencies, as illustrated in Fig.~\ref{Propagation}.
We assume that the \emph{type} of a UCA remains unchanged during propagation; for example, an NP-type UCA will not transform into WP, IM, or IT along the propagation path.

For tractability and clarity of reasoning, STPA further restrict UCA transmission to a single, non-branching control path.
That is, although a UCA may in reality influence multiple downstream components simultaneously, STPA reasoning considers only one propagation target at a time.
Each propagation instance is analyzed independently, such as from a sensor to a controller and then to a single actuator.
In summary, the total number of UCAs in a control system can be expressed as:$N_{\mathrm{UCA}} = F \bigl( N_A + N_C N_A + N_S N_C N_A \bigr)$
\noindent
where $F$ denotes the number of UCA per component defined as 4, and
$N_S$, $N_C$, and $N_A$ represent the numbers of sensors,
controllers, and actuators, respectively.
For example, in the system shown in Fig.~\ref{control_system},
$N_{\mathrm{UCA}} = 60$ when $N_S = 3$, $N_C = 1$, and $N_A = 3$.

Based on this structured propagation model, the role of the large language model is explicitly decomposed.
In the first step, the LLM identifies all control components in the system—namely sensors, controllers, and actuators—together with their relevant functional descriptions, using the proposed extraction and validation steps.

During enumeration, the framework systematically generates all possible UCAs by the aforementioned enumeration method: each control component has 4 UCA types multiplied by all the number of component permutations after it.
This process enumerates every valid UCA path permitted by the control structure, ensuring completeness and preventing omissions.
For each enumerated UCA propagation path, a lower-level generation step is then invoked to construct a concrete causal scenario.
Specifically, the model is asked whether a plausible real-world situation exists in which the UCA can occur.
If such a scenario is feasible, it must be explicitly considered by system designers during safety review.
